\documentclass[12pt,a4paper]{article}

\usepackage[utf8]{inputenc}
\usepackage[T1]{fontenc}
\usepackage[protrusion=true,expansion=false]{microtype}
\usepackage{geometry}
\usepackage{amsmath,amssymb,amsthm}
\usepackage{mathtools}

\usepackage{graphicx}
\usepackage{booktabs}

\usepackage{algorithm}
\usepackage{algpseudocode}

\usepackage{setspace}
\usepackage{parskip}
\usepackage{natbib}

\usepackage[colorlinks=true, linkcolor=blue, citecolor=blue, urlcolor=blue]{hyperref}
\usepackage{cleveref}
\crefname{proposition}{Proposition}{Propositions}

\usepackage{titlesec}
\titleformat{\section}{\large\bfseries}{\thesection.}{0.7em}{}
\titleformat{\subsection}{\normalsize\bfseries}{\thesubsection.}{0.7em}{}

\newtheorem{theorem}{Theorem}[section]

\newtheorem{proposition}[theorem]{Proposition}

\newtheorem{definition}[theorem]{Definition}
\newtheorem{remark}[theorem]{Remark}

\newcommand{\R}{\mathbb{R}}
\newcommand{\norm}[1]{\left\lVert#1\right\rVert}
\newcommand{\Ogap}{\Omega}
\newcommand{\supp}{\operatorname{supp}}
\newcommand{\VQ}{V_{\mathrm{OQ}}}
\newcommand{\Sspace}{\mathcal{S}}

\title{\textbf{State Representation and Termination for Recursive Reasoning Systems}}

\author{%
  Debashis Guha\thanks{S P Jain School of Global Management;
    \texttt{debashis.guha@spjain.org}} \quad
  Amritendu Mukherjee\thanks{Indian Statistical Institute;
    \texttt{amritendum@alum.iisc.ac.in}} \\[4pt]
  Sanjay Kukreja\thanks{S P Jain School of Global Management;
    \texttt{sanjay.ds18dba008@spjain.org}} \quad
  Tarun Kumar\thanks{eClerx Services Ltd.;
    \texttt{Tarun.Kumar06@eclerx.com}}
}
\date{}

\begin{document}
\maketitle

\begin{abstract}
Recursive reasoning systems alternate between acquiring new evidence and
refining an accumulated understanding. Two design choices are typically
left implicit: how to represent the evolving reasoning state, and when
to stop iterating. This paper addresses both.\\
We represent the reasoning state as an \emph{epistemic state graph}
encoding extracted claims, evidential relations, open questions, and
confidence weights. We define the \emph{order-gap} as the
distance between the states reached by expand-then-consolidate versus
consolidate-then-expand; a small order-gap suggests that the two
orderings agree and further iteration is unlikely to help. Our main result 
gives a necessary and sufficient condition for the
linearised order-gap to be non-degenerate near the fixed point, showing
when the criterion is informative rather than algebraically vacuous. This
is a local condition, not a global convergence guarantee. We apply the
framework to recursive reasoning systems and sketch its
application to agent loops, tree-of-thought reasoning, theorem proving,
and continual learning.
\end{abstract}

\noindent\textbf{Keywords.}
recursive reasoning; knowledge graph; state representation; iterative
refinement; termination criterion; operator non-commutativity;
convergence detection; long-context reasoning.
\newpage

\section{Introduction}
\label{sec:intro}

\subsection{Recursive Reasoning as Adaptive Inference}

A growing class of machine learning systems operates by iterative
refinement rather than single-pass inference. At each step such a system
acquires new information (a retrieved document, an observed action
outcome, a sampled hypothesis, or a newly generated chain-of-thought step)
and integrates it into an accumulated internal state before deciding
whether to iterate further or commit to an output. This pattern appears
in iterative retrieval-augmented generation \citep{trivedi2023ircot},
agent action-observation loops \citep{yao2023react}, recursive language
models \citep{zhang2025rlm}, tree-of-thought and graph-of-thought
reasoning \citep{yao2023tot,besta2023got}, and continual learning
\citep{kirkpatrick2017ewc}. We call this class \emph{recursive reasoning
systems}.

Despite their diversity, recursive reasoning systems share a common
structure. There is an internal state $s_t$ representing
what the system holds at step $t$. There is an \emph{expansion} step that
brings in new external evidence. There is a \emph{consolidation} step that
refines the state using only what is already present. And there is a
termination decision: iterate further, or commit.

\subsection{Two Missing Components}

In current systems, both the state and the termination criterion are handled
implicitly or imposed from outside, and their absence leads to characteristic
failure modes across four well-studied architectures as discussed below

\textbf{Chain-of-thought and tree-of-thought reasoning.}
Chain-of-thought prompting \citep{wei2022cot} and its tree-structured
extension \citep{yao2023tot} represent the reasoning state as a text
trace. There is no explicit record of which claims are supported, which
are in conflict, and which remain unresolved. Contradictions accumulate
without detection; reasoning drifts across steps; and there is no
mechanism to revisit or revise earlier conclusions in light of later
evidence. Termination is fixed by depth or iteration count, with no
notion of reasoning being complete.

\textbf{Retrieval-augmented generation and multi-hop QA.}
Iterative retrieval systems such as IRCoT \citep{trivedi2023ircot}
alternate between retrieving evidence and generating reasoning steps.
Retrieved facts are not tracked relationally across iterations,
provenance is not preserved in a structured way, and the system has no
representation of what it still needs to find. This leads to redundant
retrieval, missing critical evidence, and inconsistent synthesis across
documents. Termination is a fixed hop count or token budget, with no
signal whether further retrieval would change the answer or not.

\textbf{Agent action-observation loops.}
In ReAct-style agents \citep{yao2023react}, the state is a sequence of
actions and observations with no canonical representation of goals,
subgoals, or resolved versus unresolved tasks. This produces looping
behaviour, repeated tool calls, and failure to recognise completion.
Termination is an arbitrary maximum step count or manual stop condition.

\textbf{Iterative self-refinement.}
Self-Refine \citep{madaan2023selfrefine} and related approaches iterate
a generate-critique-refine loop without an explicit representation of
what has been fixed, what remains uncertain, or what is stable. Without
such a state, the system has no signal for when further refinement is
beneficial. Empirically, intrinsic self-correction without external
feedback frequently fails to improve and can degrade performance
\citep{huang2023selfcorrect}: the system oscillates between answers,
over-refines, or hallucinates improvements. Termination is a fixed
iteration count or a subjective heuristic.

Across all four cases, state is implicit in transient text or hidden
activations rather than structured, persistent, and inspectable; and
termination is imposed by fixed limits rather than derived from the
system's own trajectory.

\subsection{The Proposal}

We propose treating state representation and termination as explicit
design choices, and give concrete definitions to both.\\
For state, we introduce the \emph{epistemic state graph}
(\Cref{sec:state}): a structured graph whose nodes encode claims,
partial answers, and open questions, and whose edges encode evidential
support, logical dependency, and inconsistency, all weighted by
confidence. This representation is structured, persistent, and
inspectable across iterations.\\
For termination, we introduce the \emph{order-gap} (\Cref{sec:termination}):
a criterion derived from the system's own operator structure that measures
whether the system's state is still sensitive to the ordering of expansion
and consolidation. When the order-gap is small, both orderings produce
nearly the same result, suggesting that further iteration is unlikely to
matter. When it is large, the ordering still matters and the system has
not yet settled.

\subsection{Contributions}

\begin{enumerate}
  \item \textbf{State representation} (\Cref{sec:state}). The
    epistemic state graph, with a smooth Euclidean embedding
    that makes the expansion and consolidation operators amenable to
    analysis.
  \item \textbf{Operator framework} (\Cref{sec:operators}). Formal
    definitions of the expansion operator $P_e$ and consolidation operator
    $Q$ on the epistemic state, with their dynamics and key properties.
  \item \textbf{Order-gap termination} (\Cref{sec:termination}). The
    order-gap termination criterion, a windowed stopping rule, and a
    non-degeneracy theorem (\Cref{prop:gramian}) characterising when the
    criterion is informative.
  \item \textbf{Stopping algorithm} (\Cref{sec:alg}). Pseudocode for
    recursive reasoning with order-gap termination.
  \item \textbf{Other recursive reasoning systems} (\Cref{sec:other}).
    How the framework applies to agent loops, tree-of-thought reasoning,
    theorem proving, and continual learning.
  \item \textbf{Illustration} (\Cref{sec:sim}). A closed-form 2-dimensional
    example verifying the dynamics and order-gap formula.
\end{enumerate}

\section{State Representation}
\label{sec:state}

\subsection{Requirements for a Reasoning State}

A recursive reasoning system iterates by processing new evidence and
refining its accumulated understanding. The state must support two
operations: expansion, which adds information from a new piece of
evidence; and consolidation, which refines what is already present.
For both operations to be principled, the state must make their inputs
and outputs explicit. Expansion needs to know what is already established
(to avoid redundancy) and what is still open (to direct retrieval).
Consolidation needs to know which claims conflict (to resolve them) and
which are mutually supporting (to strengthen them). The following six
objects are the minimum needed to support both operations correctly:

\begin{itemize}
  \item \textbf{Claims}: extracted facts, with their provenance and
    confidence.
  \item \textbf{Evidential relations}: which claims support which others,
    and how strongly.
  \item \textbf{Conflicts}: which claims are inconsistent with each other.
  \item \textbf{Partial conclusions}: the system's current best answer to
    the question or sub-questions, together with the claims that support it.
  \item \textbf{Open questions}: dependencies that have been identified but
    not yet resolved: things the system knows it needs but has not yet found.
  \item \textbf{Confidence}: a weight on each claim and each relation,
    reflecting how strongly the evidence supports it.
\end{itemize}

Without these, the system cannot direct later extraction based on what
earlier extraction found, cannot detect when new evidence contradicts what
it has already consolidated, and cannot identify what it still needs. A
state that holds all of this explicitly is the foundation for principled
expansion and consolidation.

\subsection{The Epistemic State Graph}

\begin{definition}[Epistemic State Graph]
\label{def:esg}
An \emph{epistemic state graph} is a tuple $S = (V, E, \ell, c, w)$ where:
\begin{itemize}
  \item $V = V_C \cup V_A \cup \VQ$ is a finite vertex set partitioned
    into \textbf{Claim} nodes ($V_C$), \textbf{PartialAnswer} nodes
    ($V_A$), and \textbf{OpenQuestion} nodes ($\VQ$).
  \item $E \subseteq V \times V \times \mathcal{T}$ is a set of typed
    directed edges, with $\mathcal{T} = \{\textsc{Supports},
    \textsc{Requires}, \textsc{Contradicts}\}$.
  \item $\ell : V \to \R^k$ assigns a $k$-dimensional attribute vector
    to each node, encoding the claim text embedding and source identifier.
  \item $c : V \to (0,1]$ assigns a confidence weight to each node.
  \item $w : E \to (0,1]$ assigns a confidence weight to each edge.
\end{itemize}
\end{definition}

\textbf{Node types.} Claim nodes record extracted facts. PartialAnswer
nodes hold the system's current best answer to the question or a
sub-question, together with its supporting claims. OpenQuestion nodes
mark dependencies the system has identified but not yet resolved.

\textbf{Edge types.} \textsc{Supports} edges run from claims to claims or
to partial answers, encoding evidential backing. \textsc{Requires} edges
run from partial answers or claims to open questions, encoding logical
dependency on unresolved evidence. \textsc{Contradicts} edges connect
claims that are mutually inconsistent given current evidence.

\textbf{Consistency.} The graph is \emph{consistent} at threshold
$\delta \in (0,1)$ if no two nodes $u, v$ connected by a
\textsc{Contradicts} edge satisfying $c(u) > \delta$ and $c(v) >
\delta$ at the same time. Consolidation drives the graph toward consistency.

\subsection{Smooth Euclidean Embedding}
\label{sec:embed}

The epistemic graph is a combinatorial object; raw graph operations such
as adding nodes, merging nodes, and retyping edges are discrete and not
differentiable. The operator analysis in \Cref{sec:termination} requires
smooth maps. We therefore embed the graph in a fixed-dimension Euclidean
space using differentiable relaxations.

\begin{definition}[Graph Embedding]
\label{def:embed}
Fix a maximum node count $N_{\max}$ and attribute dimension $k$. The
\emph{graph embedding} $\varphi : \mathcal{G} \to \R^d$ encodes the
graph state by:
\begin{enumerate}
  \item Encoding each node by its attribute vector $\ell(v) \in \R^k$
    together with its confidence weight $c(v) \in (0,1]$, padded to
    $N_{\max}$ slots in a fixed canonical order by node type, giving a
    block of dimension $(k+1)N_{\max}$.
  \item Appending a flattened adjacency-weight tensor indexed by ordered
    node pairs and edge type, of dimension $N_{\max}^2 |\mathcal{T}|$.
\end{enumerate}
The total dimension is $d = (k+1)N_{\max} + N_{\max}^2 |\mathcal{T}|$.
\end{definition}

The embedding $\varphi$ converts $S$ into a vector $\theta = \varphi(S)
\in \R^d$, with graph operations implemented as differentiable coordinate
updates so that $P_e$ and $Q$ are $C^1$ on a neighbourhood of the states
of interest. We work in $\R^d$ with the Euclidean norm throughout.

\section{Expansion and Consolidation Operators}
\label{sec:operators}

\subsection{State Space and Operators}

Let $(\Sspace, \norm{\cdot})$ be a normed complete (Banach) space. The
embedded state $\theta_t = \varphi(S_t) \in \R^d \subset \Sspace$
represents the system's accumulated understanding at step $t$.

The \emph{expansion operator} $P_e : \Sspace \to \Sspace$ updates the
state by incorporating a new piece of evidence $e \in \mathcal{E}$, drawn
from a distribution $P(\cdot \mid \theta_t)$ that may depend on the
current state. In the epistemic graph setting, $P_e$ adds Claim nodes for
newly extracted facts, links them to existing nodes via typed edges, and
promotes any OpenQuestion node that $e$ directly resolves to a
PartialAnswer node. Expansion is the only means by which new external
information enters the state.

The \emph{consolidation operator} $Q : \Sspace \to \Sspace$ refines the
state using only what is already present, without acquiring new evidence.
In the epistemic graph, $Q$ resolves \textsc{Contradicts} pairs by
removing or downweighting the lower-confidence endpoint, merges redundant
Claim nodes, and shifts weight toward the current best PartialAnswer.

The expansion-consolidation decomposition and its formal properties are
developed in full generality in \citet{guha2026opmech}; this paper
applies the decomposition to recursive reasoning systems and uses
it to derive the order-gap termination criterion.

For the operator analysis, we impose the following assumption. The
concrete graph operations described above do not automatically satisfy it
under all embeddings; it is a condition on the implemented consolidation
map, which must be verified for a given implementation.

\begin{description}
  \item[Assumption (Contractivity).] $Q$ is a \emph{contraction} on
    $(\Sspace, \norm{\cdot})$: there exists $\rho \in [0,1)$ such that
    $\norm{Q(\theta_1) - Q(\theta_2)} \leq \rho\norm{\theta_1 - \theta_2}$
    for all $\theta_1, \theta_2 \in \Sspace$.
\end{description}

Since $\Sspace$ is a normed complete space and $Q$ maps $\Sspace$ into
itself, the Banach fixed-point theorem then guarantees a unique fixed
point $\theta^\star \in \Sspace$ satisfying $Q(\theta^\star) =
\theta^\star$. This is a fixed point of the consolidation map $Q$ alone,
not necessarily of the full stochastic dynamics $\theta_{t+1} =
Q(P_{e_t}(\theta_t))$, which depends on the expansion operator and the
evidence distribution as well.

\subsection{Dynamics}

The update rule at each recursive reasoning step is:
\begin{equation}
\label{eq:dynamics}
  \theta_{t+1} = Q(P_{e_t}(\theta_t)), \qquad e_t \sim P(\cdot \mid
  \theta_t).
\end{equation}
The system expands to incorporate new evidence, then consolidates what
it holds. This is the structure underlying the Recursive Language Model's
read-then-aggregate loop \citep{zhang2025rlm} and the Tiny Recursive
Model's $z$-then-$y$ iteration \citep{jolicoeurmartineau2025trm}; the
two systems arrive at the same expand-then-consolidate pattern from
opposite ends of the model-scale spectrum.

\subsection{Non-Commutativity}

A key structural property of the operators $P_e$ and $Q$ is that they
do not, in general, commute. Consolidating before expansion commits the
state to its current best answer before new evidence arrives; expanding
before consolidation allows new evidence to influence the state before
commitment. These two orderings produce different results, and the
magnitude of the difference reflects how much the system's current
understanding would change if it paused to look at one more piece of
evidence before consolidating. This difference is the quantity we use as
a termination criterion.

\section{The Termination Problem}
\label{sec:term_problem}

\subsection{Why Termination is Hard Without State}

Without an explicit state, the termination decision has nothing to act on.
A system whose state is an unstructured text trace cannot inspect whether
it has found evidence for all its open questions, whether conflicts have
been resolved, or whether further evidence would change the answer. It
must rely on external proxies: a step count, a token budget, or a
self-reported confidence level.

Each proxy fails predictably. Step counts are right for neither easy
instances (which converge early) nor hard ones (which may need more depth).
Token budgets conflate the cost of computation with the value of additional
reasoning. Self-reported confidence is systematically miscalibrated on
precisely the instances that are hardest \citep{kadavath2022calibration}.

\subsection{What a Principled Termination Criterion Needs}

A principled termination criterion should have three properties. First,
it should be \emph{endogenous}: derived from the system's own state
trajectory, not imposed from outside. Second, it should be \emph{sensitive}:
it should be large when further iteration would materially change the
answer and small when the system has settled. Third, it should be
\emph{computable}: calculable from quantities the system already has
access to, without requiring knowledge of the ground truth or the full
document.

The termination criterion we propose in \Cref{sec:termination} is
designed to satisfy all three. It is derived from the operator
structure of the system itself, it reflects whether the ordering of
operations still matters, and it is computable at each step from the
current state and the current evidence.

\section{Order-Gap Termination}
\label{sec:termination}

\subsection{The Order-Gap Criterion}

\begin{definition}[Order-Gap {\normalfont\citep{guha2026opmech}}]
\label{def:ogap}
The \emph{order-gap} at state $\theta \in \Sspace$ given evidence $e$ is
\begin{equation}
\label{eq:ogap}
  \Ogap(\theta;\, e) \;=\; \norm{Q(P_e(\theta)) - P_e(Q(\theta))}.
\end{equation}
\end{definition}

The definition holds the
realised evidence item $e$ fixed across both orderings. The first
ordering $Q(P_e(\theta))$ is what the system actually computes
(expand then consolidate). The second $P_e(Q(\theta))$ is the
hypothetical in which consolidation happened first. Since the
evidence distribution may depend on the state, $P_e(Q(\theta))$ uses
the same realized $e$ rather than resampling from $P(\cdot \mid
Q(\theta))$; this makes the comparison well-defined and computable
from quantities already available at each step.

When $\Ogap(\theta; e)$ is large, the two orderings of expansion and
consolidation produce substantially different states: the system's answer
is sensitive to whether it consolidates before or after reading the next
piece of evidence, indicating it has not settled. When $\Ogap(\theta; e)$
is small, both orderings produce nearly the same result: the ordering no
longer matters, and further expansion is less likely to change the outcome materially.

\subsection{Windowed Stopping Rule}

This motivates the following termination criterion: halt iteration when
the windowed empirical order-gap
\begin{equation}
\label{eq:stopping}
  \widehat{\Ogap}_{t,w} \;=\; \frac{1}{w}
  \sum_{\tau=t-w+1}^{t} \Ogap(\theta_\tau;\, e_\tau)
\end{equation}
falls below a threshold $\varepsilon > 0$.

\textbf{Scope.} A small order-gap is a useful termination criterion, but it
does not by itself guarantee that the state has converged to the true
answer. \Cref{prop:gramian} below establishes when the linearised
order-gap near the fixed point is non-degenerate, so that a small
order-gap is not an algebraic artefact. Connecting this to a global
convergence guarantee
(via Lipschitz constants, noise bounds, and finite-sample concentration
of $\widehat{\Ogap}_{t,w}$) would require additional assumptions and
is beyond the present scope.

\subsection{Linearised Commutator and Gramian}
\label{sec:linearise}

To characterise non-degeneracy formally, we linearise the operators at
the consolidation fixed point $\theta^\star$. We impose one additional
assumption:

\begin{description}
  \item[Assumption (Redundancy at the fixed point).]
    $P_e(\theta^\star) = \theta^\star$ for all $e \in \mathcal{E}$.
\end{description}

This says that at the fully consolidated fixed point, processing
additional evidence leaves the state unchanged, since all relevant
information is already incorporated. Under this assumption, the chain
rule applied to $F_e(\theta) = Q(P_e(\theta)) - P_e(Q(\theta))$ at
$\theta^\star$ gives $DF_e(\theta^\star) = DQ(P_e(\theta^\star))
DP_e(\theta^\star) - DP_e(\theta^\star) DQ(\theta^\star) =
DQ(\theta^\star) DP_e(\theta^\star) - DP_e(\theta^\star) DQ(\theta^\star)$,
where the last equality uses $P_e(\theta^\star) = \theta^\star$.
We define the \emph{linearised commutator} for evidence $e$ as the following
quantity:
\begin{equation}
\label{eq:commutator}
  \Sigma_e \;:=\; DQ(\theta^\star)\,DP_e(\theta^\star)
    - DP_e(\theta^\star)\,DQ(\theta^\star) \;\in\; \R^{d \times d}.
\end{equation}
The \emph{commutator Gramian} is
\begin{equation}
\label{eq:gramian_def}
  G_{\theta^\star} \;:=\; \mathbb{E}_{e \sim P(\cdot \mid \theta^\star)}
  \bigl[\Sigma_e^\top \Sigma_e\bigr]
  \;\in\; \R^{d \times d},
\end{equation}
where the expectation is taken under the evidence distribution evaluated
at the fixed point $\theta^\star$. For any $v \in \R^d$, the quadratic
form satisfies:
\begin{equation}
\label{eq:quad}
  v^\top G_{\theta^\star} v
  \;=\; \mathbb{E}_{e \sim P(\cdot \mid \theta^\star)}
  \bigl[\norm{\Sigma_e v}^2\bigr].
\end{equation}
For finite or countable evidence distributions this expands as
$\sum_{e \in \supp(P)} P(e \mid \theta^\star)\,\norm{\Sigma_e v}^2$;
for general evidence spaces the sum is replaced by an integral and the
support condition by a $P(\cdot \mid \theta^\star)$-almost-sure condition.
The proposition below is stated for the finite/countable case; the
general case follows by replacing sums with integrals throughout.

\subsection{Non-Degeneracy of the Commutator Gramian}

\begin{proposition}[Local Non-Degeneracy of the Commutator Gramian]
\label{prop:gramian}
Let $W \subseteq \R^d$ be a subspace. For finite or countable evidence
distributions, the commutator Gramian satisfies $G_{\theta^\star} \succ 0$
on $W$ (i.e., $v^\top G_{\theta^\star} v > 0$ for all nonzero $v \in W$)
if and only if
\begin{equation}
\label{eq:kernel}
  \bigcap_{e \in \supp(P(\cdot \mid \theta^\star))} \ker(\Sigma_e)
  \;\cap\; W \;=\; \{0\}.
\end{equation}
For general evidence spaces, the equivalent condition is: for every
nonzero $v \in W$, $\Sigma_e v \neq 0$ on a set of positive
$P(\cdot \mid \theta^\star)$-measure.
\end{proposition}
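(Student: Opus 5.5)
The plan is to work directly with the quadratic-form identity \eqref{eq:quad}. For the finite/countable case it reads
\[
v^\top G_{\theta^\star} v \;=\; \sum_{e \in \supp(P(\cdot\mid\theta^\star))} P(e \mid \theta^\star)\,\norm{\Sigma_e v}^2 .
\]
This is a sum of nonnegative terms with strictly positive weights, since every $e$ in the support has $P(e\mid\theta^\star)>0$. So $G_{\theta^\star}\succeq 0$ on all of $\R^d$. Moreover, the sum vanishes if and only if every term vanishes, that is, if and only if $\Sigma_e v = 0$ for every $e$ in the support.

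Both directions of the equivalence follow from this observation by contraposition. ($\Leftarrow$) Suppose \eqref{eq:kernel} holds and $v\in W$ is nonzero. Then $v$ is not in $\bigcap_e \ker\Sigma_e$, so some supported $e_0$ has $\Sigma_{e_0}v\neq 0$. Its term $P(e_0\mid\theta^\star)\norm{\Sigma_{e_0}v}^2>0$, and the remaining terms are nonnegative, so $v^\top G_{\theta^\star}v>0$. ($\Rightarrow$) Suppose some nonzero $v$ lies in $\bigcap_e\ker\Sigma_e\cap W$. Then every term vanishes, so $v^\top G_{\theta^\star}v=0$ and positivity on $W$ fails.

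Before this, I will check that $G_{\theta^\star}$ is well defined in the countable case. Finiteness of the series needs $\sup_e\norm{\Sigma_e}<\infty$ or some integrability of $\norm{\Sigma_e}^2$. I will state this as an implicit standing assumption: the $C^1$ hypotheses on $P_e$ and $Q$ near $\theta^\star$, together with a uniform bound on $\norm{DP_e(\theta^\star)}$, give it. With that in place, exchanging the sum with the quadratic form is justified by nonnegativity, via Tonelli.

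For general evidence spaces, I will replace the sum by $\int \norm{\Sigma_e v}^2\,dP(e\mid\theta^\star)$. The key fact is that the integral of a nonnegative measurable function is zero if and only if the function vanishes almost surely. Measurability of $e\mapsto\norm{\Sigma_e v}^2$ is needed and will be assumed as part of the definition of the expectation in \eqref{eq:gramian_def}. With it, $v^\top G v>0$ holds exactly when $\Sigma_e v\neq 0$ on a set of positive measure, which is the stated condition.

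I expect no genuine obstacle. The only delicate points are integrability and measurability, which make $G_{\theta^\star}$ well defined, and I will handle them by making the standing assumptions explicit rather than proving them.
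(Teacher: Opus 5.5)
Your proposal is correct and follows essentially the same argument as the paper: write $v^\top G_{\theta^\star} v$ as a positively weighted sum (or integral) of the nonnegative terms $\norm{\Sigma_e v}^2$, and use that it vanishes if and only if every term vanishes (almost surely in the general case). Your explicit integrability and measurability assumptions, which make $G_{\theta^\star}$ well defined, are a sensible refinement that the paper leaves implicit.
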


\begin{proof}
For finite or countable distributions,
$v^\top G_{\theta^\star} v = \sum_{e \in \supp(P(\cdot \mid \theta^\star))}
P(e \mid \theta^\star)\,\norm{\Sigma_e v}^2$. Since
$P(e \mid \theta^\star) > 0$ for all $e \in \supp(P(\cdot \mid \theta^\star))$,
each summand is non-negative. Hence $v^\top G_{\theta^\star} v = 0$ if and
only if $\norm{\Sigma_e v}^2 = 0$ for every $e$ in the support, i.e.,
$v \in \ker(\Sigma_e)$ for every such $e$. Therefore
$v^\top G_{\theta^\star} v = 0 \iff v \in \bigcap_{e \in \supp(P(\cdot
\mid \theta^\star))} \ker(\Sigma_e)$. Restricting to $W$:
$G_{\theta^\star} \succ 0$ on $W$ holds iff no nonzero $v \in W$ has
$v^\top G_{\theta^\star} v = 0$, which is exactly condition
\eqref{eq:kernel}. For the general-distribution case,
$v^\top G_{\theta^\star} v = \int \norm{\Sigma_e v}^2\,
dP(e \mid \theta^\star)$. Since the integrand is non-negative,
the integral is zero if and only if $\Sigma_e v = 0$ holds
$P(\cdot \mid \theta^\star)$-almost surely. Hence
$G_{\theta^\star} \succ 0$ on $W$ holds if and only if every nonzero
$v \in W$ satisfies $\Sigma_e v \neq 0$ on a set of positive
$P(\cdot \mid \theta^\star)$-measure.
\end{proof}

\begin{remark}[Scope and interpretation]
\label{rem:scope}
\Cref{prop:gramian} is a local result at $\theta^\star$. It
characterises when every direction in $W$ is moved by at least one
evidence item in $\supp(P)$ under the first-order approximation to the
commutator. When condition \eqref{eq:kernel} holds, the linearised
order-gap near $\theta^\star$ is not algebraically zero in any direction
of $W$; a small observed order-gap is therefore not explained merely
by algebraic cancellation in the linearised commutator. The condition does not imply that the current state $\theta_t$
is close to $\theta^\star$, nor does it bound the error of the current
answer. In particular, it does not imply global convergence or optimal
stopping. The expansion-consolidation operator framework, the order-gap
termination criterion, and global convergence properties under Lipschitz
and contraction conditions are developed in \citet{guha2026opmech}; the present paper
applies that framework to the recursive reasoning setting and derives the
non-degeneracy condition for the graph-structured case.
\end{remark}

\subsection{Per-Question Coverage Check}

\begin{remark}[Necessary coverage check]
\label{rem:coverage}
The kernel intersection condition \eqref{eq:kernel} must hold for all
nonzero $v \in W$, which is generally not feasible to verify directly.
A necessary check operates question by question. For each open-question
node $q \in \VQ(S)$, let $v_q = \varphi(S^{(q)}) - \varphi(S)$ be the
embedding direction corresponding to resolving $q$, where $S^{(q)}$ is
the state with $q$ resolved. Check whether there exists $e \in \supp(P)$
such that $\Sigma_e v_q \neq 0$, equivalently whether the evidence
distribution assigns positive probability to at least one item that bears
on $q$ through a \textsc{Supports} or \textsc{Requires} path.

This check is necessary but not sufficient for \eqref{eq:kernel}: a
linear combination of the $v_q$ may lie in the kernel intersection even
when each basis direction does not. Any open-question node failing the
check identifies a definite gap in the evidence that must be repaired
before applying the non-degeneracy condition.
\end{remark}

\section{Algorithm}
\label{sec:alg}

\Cref{alg:stop} formalises the recursive reasoning procedure with
order-gap termination. Each iteration computes both $Q(P_{e_t}(\theta_t))$
(the actual state update) and $P_{e_t}(Q(\theta_t))$ (the alternative
ordering, computed only to evaluate the order-gap). This roughly doubles the
per-iteration consolidation cost, which is typically small relative to
the cost of expansion (document retrieval, model forward passes).

\begin{algorithm}[h]
\caption{Recursive Reasoning with Order-Gap Termination}
\label{alg:stop}
\begin{algorithmic}[1]
\Require Initial state $\theta_0$; operators $P_e$, $Q$; evidence
  distribution $P(\cdot \mid \theta)$; threshold $\varepsilon > 0$;
  window width $w$; budget $T_{\max}$
\Ensure Final state $\theta_T$; stopping step $T$
\State Initialise buffer $B \leftarrow ()$
\For{$t = 0, 1, \ldots, T_{\max} - 1$}
  \State Sample $e_t \sim P(\cdot \mid \theta_t)$
  \State Compute $\Ogap_t \leftarrow \norm{Q(P_{e_t}(\theta_t)) -
    P_{e_t}(Q(\theta_t))}$
  \State Update $\theta_{t+1} \leftarrow Q(P_{e_t}(\theta_t))$
  \State Append $\Ogap_t$ to $B$
  \If{$|B| \geq w$ \textbf{and}
    $\frac{1}{w}\sum_{i=|B|-w+1}^{|B|} B_i \leq \varepsilon$}
    \State \Return $(\theta_{t+1},\; t+1)$
  \EndIf
\EndFor
\State \Return $(\theta_{T_{\max}},\; T_{\max})$
\end{algorithmic}
\end{algorithm}

\section{Application to Recursive Language-Model Reasoning}
\label{sec:rlm}

We now apply the framework to recursive language-model reasoning,
the setting in which the state and termination problems are sharpest and
the engineering infrastructure most developed.

\subsection{The Long-Context Problem and Recursive Approaches}

Single-pass transformer models read all tokens simultaneously. As document
length grows, performance on tasks requiring integration of information
across the document degrades, not because the model lacks capacity in
principle, but because attention, spread across tens of thousands of
tokens, diffuses \citep{liu2024lostmiddle}. The Recursive Language Model
(RLM), developed by \citet{zhang2025rlm} addresses this by treating the document as an external environment to be explored programmatically; the model reads
fragments, extracts structured information, and spawns recursive calls for
sub-questions, folding their results into a growing aggregate state. On
the OOLONG long-context benchmark, the RLM maintained strong performance
at one million tokens where single-pass models degraded substantially.

The Tiny Recursive Model (TRM) of \citet{jolicoeurmartineau2025trm},
developed independently for abstract visual reasoning, alternates a
$z$-update (refining the latent state given fixed input) with a $y$-update
(committing to an output). At seven million parameters it achieved 45\%
on ARC-AGI-1, surpassing models orders of magnitude larger.

Both systems follow the dynamics of \eqref{eq:dynamics} but leave its
two components implicit. The RLM's aggregate state is an unstructured
REPL buffer; the TRM's is a latent vector. Neither derives its
termination criterion from the operator structure.

\subsection{Operator Identification}

In the language-model setting, the epistemic graph of \Cref{def:esg} is
the state $S_t = \theta_t$. The operators are:

\textbf{Expansion.} $P_{e_t}(S)$ incorporates document chunk $e_t$: it
adds Claim nodes for newly extracted facts, links them to existing nodes
via typed edges, and promotes any OpenQuestion node that $e_t$ resolves
to a PartialAnswer node.

\textbf{Consolidation.} $Q(S)$ refines the accumulated state: it resolves
\textsc{Contradicts} pairs by removing or downweighting the lower-confidence
endpoint, merges redundant Claim nodes, and shifts weight toward the
current best PartialAnswer.

\subsection{Control Applications}

Three control applications follow directly from the order-gap termination criterion.

\textbf{Stopping.} Halt chunk processing when
$\widehat{\Ogap}_{t,w} \leq \varepsilon$. Assuming that reprocessing a
chunk already fully represented leaves the state unchanged, this follows
\Cref{alg:stop} with formal motivation from \Cref{prop:gramian} when the
coverage check of \Cref{rem:coverage} holds.

\textbf{Consolidation scheduling.} Trigger $Q$ when accumulated
order-gap since the last consolidation exceeds a cost threshold, rather
than on a fixed schedule.

\textbf{Adaptive extraction.} Weight chunk selection toward evidence that
addresses OpenQuestion nodes failing the coverage check of
\Cref{rem:coverage}, directing retrieval toward what is genuinely missing.

\section{Other Recursive Reasoning Systems}
\label{sec:other}

The expansion-consolidation decomposition and the order-gap termination
criterion apply beyond recursive language models. We sketch five
additional cases to establish the generality of the framework.

\textbf{Agent action-observation loops.} In systems such as ReAct-style
agents \citep{yao2023react}, the state is the agent's current understanding of its goals,
sub-goals, and environmental observations. Expansion is taking an action
and observing the outcome; consolidation is updating the internal goal
structure in light of the observation. The order-gap measures whether the
agent's goal understanding would shift if it took one more action before
updating its goal model. The failure modes of repeated tool calls and
looping correspond precisely to a large order-gap that the system
cannot detect because it has no state to measure it against.

\textbf{Tree-of-thought and graph-of-thought reasoning.} These systems
\citep{yao2023tot,besta2023got} generate multiple reasoning paths and
aggregate them.
Expansion is generating a new reasoning branch; consolidation is
aggregating evidence across branches. The order-gap measures whether a
new branch would change the aggregated conclusion before the current
branches are consolidated. The absence of an explicit state means that
contradictions across branches accumulate without detection, in analogy with
the \textsc{Contradicts} edge problem in the epistemic graph.

\textbf{Iterative self-refinement.}
Self-refinement systems \citep{madaan2023selfrefine} iterate a
generate-critique-refine loop. Expansion is generating a new candidate
output or critique; consolidation is integrating the critique into a
revised answer. The order-gap measures whether the system's current
answer would change if it consolidated its existing critique before
generating a new one. Without an explicit state tracking what has been
fixed and what remains uncertain, the system cannot detect when
refinement has converged, which is precisely the condition under which
further iteration degrades rather than improves the answer
\citep{huang2023selfcorrect}.

\textbf{Theorem proving.} In neural theorem provers \citep{lample2022hypertree},
the state is the current proof context: established lemmas, open subgoals,
and failed attempts. Expansion is applying a tactic or lemma; consolidation
is simplifying the proof context and closing sub-goals. The order-gap
measures whether the proof context is sensitive to the order in which
tactics are applied, a useful indicator of whether the current
strategy is well-founded.

\textbf{Continual learning.} A continual learning system \citep{kirkpatrick2017ewc}
must expand to incorporate new task evidence while consolidating to protect
previously learned knowledge. The order-gap measures the conflict between
new task evidence and existing knowledge: large values indicate that the
system is in a regime of high task conflict where plasticity and retention
trade against each other, and adaptive regularisation is warranted.

In all these cases the two components (an explicit structured state
and the order-gap as a termination criterion) apply naturally within
the general framework of \Cref{sec:operators,sec:termination}.

\section{Closed-Form Illustration}
\label{sec:sim}

We trace the order-gap on a synthetic example with analytically specified
operators to verify that the dynamics \eqref{eq:dynamics} and the
order-gap formula \eqref{eq:ogap} are internally consistent and to
illustrate the criterion's qualitative behaviour. This is expository, not
empirical.

\subsection{Operators and Closed-Form Order-Gap}

We use a 2-dimensional state $\theta = (c, u) \in [0,1]^2$, where $c$ is
the system's confidence in its current answer and $u$ is its residual
uncertainty. The operators are:
\begin{align}
  P_e(c,\, u) &= \bigl(c + \alpha_e(1 - c),\;\; u\bigr),
  \label{eq:Pe} \\
  Q(c,\, u)   &= \bigl(c,\;\; \rho(1 - c)\,u\bigr),
  \label{eq:Q}
\end{align}
where $\alpha_e \in (0,1)$ is the relevance weight of evidence $e$ and
$\rho \in (0,1)$ is the consolidation rate. $P_e$ raises confidence
proportionally to relevance; $Q$ reduces uncertainty proportionally to
remaining headroom $(1-c)$, so consolidation has diminishing effect as
confidence approaches one.

Both operators extend to $C^\infty$ maps on an open neighbourhood of
$[0,1]^2$. For $Q$ alone, every point
$(c, 0)$ is a fixed point. Under repeated relevant expansion with
$\alpha_e > 0$, the coupled dynamics $\theta_{t+1} = Q(P_{e_t}(\theta_t))$
drive $c \to 1$ and $u \to 0$, so $(1, 0)$ is the limiting terminal
state of the full process. This example does not satisfy the global
contraction assumption on the full $(c, u)$ state, since $Q$ leaves $c$
unchanged, but it illustrates the order-gap calculation and decay
behaviour in a transparent setting.

\textbf{Closed-form order-gap.} Direct computation gives:
\begin{align}
  Q(P_e(c,u)) &= \bigl(c + \alpha_e(1-c),\;\;
    \rho\,(1-c)(1-\alpha_e)\,u\bigr), \label{eq:QPe}\\
  P_e(Q(c,u)) &= \bigl(c + \alpha_e(1-c),\;\;
    \rho\,(1-c)\,u\bigr). \label{eq:PeQ}
\end{align}
The two orderings agree on the first coordinate and differ on the second:
\begin{equation}
\label{eq:gap-formula}
  \Ogap\bigl((c,u);\, e\bigr) \;=\; \rho\,\alpha_e\,(1-c)\,u.
\end{equation}
The signal vanishes as $c \to 1$ (high answer confidence), as $u \to 0$
(uncertainty exhausted), or as $\alpha_e \to 0$ (irrelevant evidence).
It is large when confidence is low, uncertainty is high, and the evidence
is relevant, the regime where further expansion is most likely to matter.

\subsection{Trajectory}

We set $\rho = 0.9$, initial state $\theta_0 = (0.20,\; 0.80)$, and
present five pieces of evidence with relevance weights $\alpha_1 = 0.05$,
$\alpha_2 = 0.35$, $\alpha_3 = 0.40$, $\alpha_4 = 0.05$, $\alpha_5 =
0.05$. Evidence items $e_2$ and $e_3$ are the pertinent ones; $e_1$,
$e_4$, $e_5$ are tangential. States $\theta_{t+1} = Q(P_{e_t}(\theta_t))$
are computed from \eqref{eq:Pe}--\eqref{eq:Q}; order-gap values
$\Ogap_t = \Ogap(\theta_{t-1};\, e_t)$ are computed from
\eqref{eq:gap-formula}. Values are computed from the closed-form equations
and rounded to three decimals.

\begin{table}[h]
\centering
\caption{Order-gap trajectory. The signal rises when pertinent evidence
($e_2$, $e_3$) arrives and falls sharply as confidence rises and
uncertainty falls. Tangential evidence ($e_4$, $e_5$) produces a near-zero
signal. Expository, not empirical.}
\label{tab:sim}
\begin{tabular}{ccccc}
\toprule
Step $t$ & Evidence & $\theta_t = (c_t, u_t)$ & $\Ogap_t$ &
  $\widehat{\Ogap}_{t,2}$ \\
\midrule
0 & --    & $(0.200,\; 0.800)$ & --      & --      \\
1 & $e_1$ & $(0.240,\; 0.547)$ & $0.029$ & --      \\
2 & $e_2$ & $(0.506,\; 0.243)$ & $0.131$ & $0.080$ \\
3 & $e_3$ & $(0.704,\; 0.065)$ & $0.043$ & $0.087$ \\
4 & $e_4$ & $(0.718,\; 0.016)$ & $0.001$ & $0.022$ \\
5 & $e_5$ & $(0.732,\; 0.004)$ & $0.000$ & $0.001$ \\
\bottomrule
\end{tabular}
\end{table}

\textbf{Sample computation (step 1).} $\theta_0 = (0.20, 0.80)$,
$\alpha_1 = 0.05$: $P_{e_1}(\theta_0) = (0.240, 0.800)$;
$\theta_1 = Q(0.240, 0.800) = (0.240,\; 0.9 \times 0.760 \times 0.800)
= (0.240,\; 0.547)$;
$\Ogap_1 = 0.9 \times 0.05 \times 0.800 \times 0.800 = 0.029$.

\textbf{Stopping behaviour.} With threshold $\varepsilon = 0.025$,
\Cref{alg:stop} halts at step 4 ($\widehat{\Ogap}_{4,2} = 0.022 <
0.025$) at which point $c_4 = 0.718$. A fixed budget of $T_{\max} = 5$
continues through $e_5$, reaching $c_5 = 0.732$, a change of $0.014$
relative to the step-4 state. The coverage check of \Cref{rem:coverage}
correctly identifies the failure mode: if $e_2$ and $e_3$ had zero
sampling probability, the order-gap would stay near zero throughout, not because the state had converged, but because the pertinent evidence
was never reached.

\section{Conclusion}
\label{sec:conclusion}

Recursive reasoning systems require two components that are
consistently left implicit in current practice - (i) an explicit, structured
representation of the evolving reasoning state, and (ii) a termination
criterion derived from the system's own dynamics rather than imposed by
an external budget.\\
We have proposed concrete definitions for both. The epistemic state graph, with its typed nodes, edges and smooth Euclidean embedding, provides a state that is structured, persistent, and amenable to operator analysis.
The order-gap, measuring the non-commutativity of
expansion and consolidation, provides a termination criterion that is
derived from the system's own dynamics, sensitive to whether further
iteration matters, and computable at each step.\\
The non-degeneracy theorem characterises exactly when the linearised criterion is
non-degenerate, meaning that a small order-gap is not explained merely by
algebraic cancellation in the linearised commutator. We are explicit that this is a local condition.\\
The framework applies naturally to recursive language-model
reasoning, agent loops, tree-of-thought
reasoning, theorem proving, and continual learning. The closed-form illustration 
confirms the qualitative behaviour of the criterion; large when pertinent
evidence still changes the state, small once the system has settled.\\
The near-term extensions are: a global convergence guarantee connecting
the non-degeneracy condition to answer error via Lipschitz and
concentration arguments, empirical evaluation of \Cref{alg:stop} against
fixed-budget and confidence-threshold baselines on long-context QA
benchmarks and further applications to continual learning and theorem
proving.


\end{document}